\theoremstyle{plain}
\newtheorem{thm}{Theorem}
\newtheorem{obs}{Observation}
\theoremstyle{definition}
\newtheorem{defn}{Definition}
\definecolor{PG}{RGB}{20, 160, 40}
\definecolor{Purple}{RGB}{140, 30, 180}
\title{DUEL: Adaptive  Duplicate Elimination on \\Working Memory for Self-Supervised Learning}
\author{%
  Won-Seok Choi$^1$, Dong-Sig Han$^1$, Hyundo Lee$^1$, Junseok Park$^1$, Byoung-Tak Zhang$^{12}$\\
  $^1$Seoul National University, $^2$AIIS\\
  \texttt{\{wchoi,dshan,hdlee,jspark,btzhang\}@bi.snu.ac.kr} \\
  % examples of more authors
  % \And
  % Coauthor \\
  % Affiliation \\
  % Address \\
  % \texttt{email} \\
}
\begin{document}

\maketitle

\begin{abstract}
In Self-Supervised Learning (SSL), it is known that frequent occurrences of the \textit{collision} in which target data and its negative samples share the same class can decrease performance. Especially in real-world data such as crawled data or robot-gathered observations, collisions may occur more often due to the duplicates in the data. To deal with this problem, we claim that sampling negative samples from the adaptively debiased distribution in the memory makes the model more stable than sampling from a biased dataset directly. In this paper, we introduce a novel SSL framework with adaptive Duplicate Elimination (DUEL) inspired by the \textit{human working memory}. The proposed framework successfully prevents the downstream task performance from degradation due to a dramatic inter-class imbalance.
\end{abstract}

\section{Introduction}
\label{intro}

\begin{wrapfigure}{r}{0.40\textwidth}
    \vspace*{-4em}
    \begin{center}
    \includegraphics[width=0.40\textwidth]{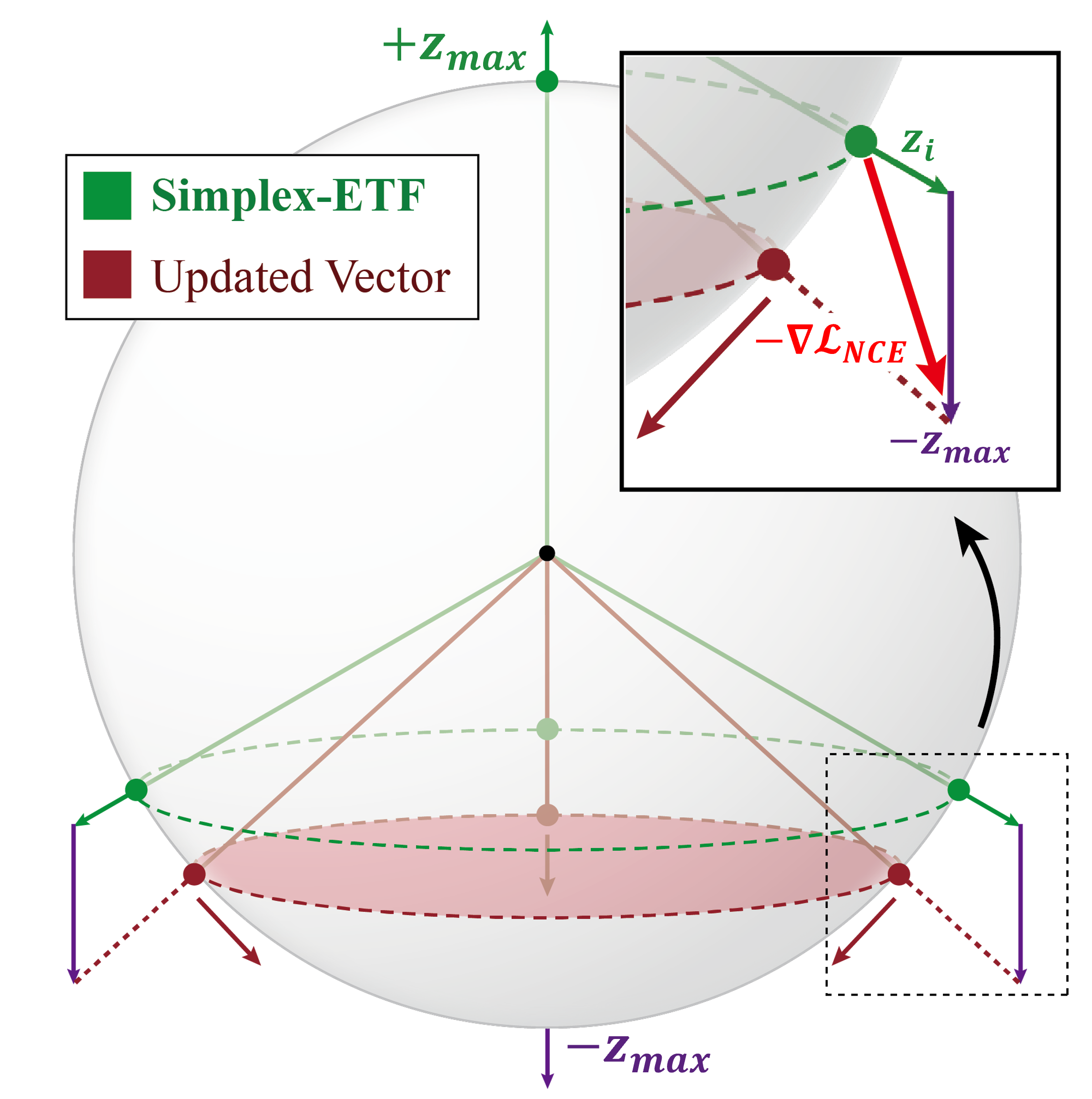}
    \vspace*{-1.5em}
    \caption{\small Visualization of \textbf{Observation \ref{obs1}}. If the dataset is biased with a class $c_{\max}$, representations of other classes (\textcolor{PG}{green}) come closer to the opposite of $z_{\max}$ (\textcolor{Purple}{purple}).}
    \label{fig1}
    \end{center}
    \vspace*{-2.5em}
\end{wrapfigure}
In Self-Supervised Learning (SSL), there is a possibility that target data and its negative samples' class information are partially duplicated during the sampling process. This phenomenon is called \textit{collision} and it leads to the degradation of the latent space's representability \citep{ash2022investigating,awasthi2022more}. When data is provided in the real world such as crawled images from the web or robot-gathered vision data, the agent may face many duplicates and they can cause collisions frequently.
%To prevent these collisions, there were several approaches like the hard negative mining \citep{kalantidis2020hard, robinson2021contrastive} or finding the best number of negative samples \citep{ash2022investigating,awasthi2022more}.

\textit{Human working memory} \citep{baddeley1992working,miyake2000unity,wongupparaj2015relation} has a Central Executive System (CES) which manages the limited memory efficiently to enhance the task performance. Some of the major roles of CES are as follows: \textit{inhibition of dominant signals} and \textit{updating recent signal} in memory \citep{miyake2000unity}. In this paper, we claim that a more efficient memory with an adaptive controller akin to the human working memory is essential to reduce the collisions for more robust training with a biased dataset.

In this paper, we first evaluate the previous SSL frameworks' robustness when the data is highly biased with a specific class. Based on the results, we introduce a novel SSL framework with adaptive Duplicate Elimination (DUEL) to imitate human working memory. The proposed framework performs training of the feature extractor and removing the most duplicated data from the memory with the current feature extractor simultaneously. We compare our proposed framework to previous popular frameworks in a biased dataset adapted from the common vision dataset.

\section{Related Work}
\label{prev_works}
The main difference between the Self-Supervised Learning frameworks is the method of selecting the negative samples. SimCLR \citep{pmlr-v119-chen20j, chen2020big} uses other data in the same batch as negative samples. MoCo \citep{He_2020_CVPR,chen2020improved} has the external memory to store representative information. BYOL \citep{NEURIPS2020_f3ada80d} and SimSiam \citep{Chen_2021_CVPR} can be trained with only positives for training by using bootstrapping.

Recently, there were analyses on the relationship between Noise Contrastive Estimation (NCE) loss and supervised loss. \citet{ash2022investigating} found that the upper bound of the supervised learning loss can be derived with two different terms: NCE loss without collisions and intra-class variances. They claimed when the collision occurs frequently, it may increase the intra-class variance and loosen the upper bound. \citet{awasthi2022more} followed the formulas from \citet{ash2022investigating} and proved the representations which optimizes the NCE loss form the simplex \textit{Equiangular Tight Frame (ETF)}.
\begin{defn} [Simplex-ETF]
The normalized representations $z_{1},\cdots,z_{k}$ and their classes $c_{1},\cdots,c_{k}\in\mathcal{C}$ form simplex-ETF when they satisfy the following property. 
\vspace{-1em}
\end{defn}
\begin{equation}
\forall i,j,\: z_{i}^Tz_{j}=
\begin{cases}
1\qquad\qquad c_{i}=c_{j}\\
-1/|\mathcal{C}|\quad\:\:\, c_{i}\neq c_{j}
\end{cases}
\end{equation}
In this work, we used the simplex-ETF to analyze the robustness of SSL frameworks.

\section{Analysis on InfoNCE Loss with Biased Dataset}
\label{ssl_with_biased}

 In this section, we analyze which shape the representations form to optimize the NCE loss with a biased dataset by expanding previous works' approaches \citep{ash2022investigating,awasthi2022more,NEURIPS2020_d89a66c7}. Let a dataset $\mathcal{D}$ contain a pair of data $x$ and its implicit class $\hat{c}$, $d=(x,\hat{c})\sim\mathcal{D}$. The implicit class $\hat{c}$ has its marginal distribution $\hat{c} \sim \rho$. In general, the positive sample $d^+$ has the same class $\hat{c}$ as $d$ and negative samples $d^-_{1:k}$ are drawn in the i.i.d. manner from the same distribution $\rho$. NCE loss is derived as below.
\begin{equation}
\mathcal{L}_{\text{NCE}}(f)=\mathbb{E}_{d,d^+,d^-_{1:k}}\left[\ell\left(\{f(x)^T\left(f(x^+)-f(x^-_{i})\right)\}_{i=1}^{k}\right)\right]
\end{equation}
$f$ is the feature extractor which projects the data $x$ onto a hypersphere. $\ell$ is a logistic loss function $\ell(\pmb{v})=\log(1+\sum \exp(-v_i))$ which is widely used recently.
\subsection{Gradient of Representation on NCE Loss with Biased Distribution}
\label{comp_grad}
Let a class $c_{\max}$ occurs more frequently than others. Then the probability of choosing $c_{\max}$ is $\rho_{\max}$, and otherwise $\rho_{\min}=\frac{1-\rho_{\max}}{|\mathcal{C}|-1}$. In practice, representations of data with each class form the clusters whose mean vectors represent the ETF-like shape. If mean vectors get closer to each other, the chance of overlapping among them will increase and it will decrease the downstream task performance.

\begin{obs}[Non-convergence to simplex-ETF with biased data] \label{obs1}
The representations optimized by the NCE loss will not converge to the simplex-ETF when the data is biased with class $c_{\max}\in \mathcal{C}$.
\end{obs}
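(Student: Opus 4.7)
The plan is to show that the simplex-ETF configuration of per-class centroids cannot be a stationary point of $\mathcal{L}_{\text{NCE}}$ whenever $\rho$ is nontrivially biased, and therefore cannot arise as an optimum. Assuming for contradiction that the optimized representations $\{z_c\}_{c\in\mathcal{C}}$ form a simplex-ETF, I would study the Riemannian (tangential) gradient $\mathrm{grad}_{z_c}\mathcal{L}_{\text{NCE}}$ for a non-dominant class $c\ne c_{\max}$; each $z_c$ lives on the unit sphere, so the relevant notion of first-order optimality is the vanishing of the tangential component $(I-z_c z_c^T)\nabla_{z_c}\mathcal{L}_{\text{NCE}}$.

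The computation proceeds by conditioning on the class labels $c^-_1,\ldots,c^-_k$ of the negative samples. At the candidate simplex-ETF, each scalar argument $z_c^T(z_c-z^-_i)$ takes one of only two values, depending on whether $c^-_i=c$ (collision) or not, so all partial derivatives of the logistic loss $\ell(\pmb{v})=\log(1+\sum\exp(-v_i))$ evaluated at the ETF are class-independent constants. Chain-rule differentiation then collapses $\mathrm{grad}_{z_c}\mathcal{L}_{\text{NCE}}$ into the form $(I-z_c z_c^T)\sum_{c'\ne c} B_{c'}\,z_{c'}$, where the coefficients $B_{c'}$ aggregate the contributions of $z_c$ across its three roles as anchor, positive, and negative sample, each weighted by $\rho_{c'}$. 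Under a uniform $\rho$ the $B_{c'}$ are equal, and the simplex identity $\sum_{c'\ne c} z_{c'} \propto -z_c$ forces the sum to lie along $z_c$, so the tangential component vanishes, recovering the known stationarity of the ETF. Under a biased $\rho$, splitting off the excess mass at $c_{\max}$ yields an additional residual proportional to $(B_{\max}-B_{\min})\,z_{c_{\max}}$; the tangential projection $(I-z_c z_c^T)z_{c_{\max}}$ is nonzero since $z_{c_{\max}}\ne\pm z_c$, and the sign of $B_{\max}-B_{\min}$, inherited from the strict monotonicity of $\ell$, points the residual towards $-z_{c_{\max}}$, in agreement with Figure~\ref{fig1}.

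The main obstacle I anticipate is careful bookkeeping: tracking the three distinct roles through which $z_c$ enters the loss and checking that no cross-role contribution re-symmetrizes the coefficients $B_{c'}$. One must also justify the strict inequality $B_{\max}>B_{\min}$, which follows from the coordinate-wise strict convexity and monotone decrease of $\ell$; once this is in hand, the rest of the argument reduces to a linear-algebraic identity on the simplex frame, and the nonvanishing tangential gradient yields the desired contradiction.
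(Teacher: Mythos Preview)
Your plan is essentially the same as the paper's: compute the gradient of $\mathcal{L}_{\text{NCE}}$ at the simplex-ETF configuration, exploit the ETF identity $\sum_{c'\ne c} z_{c'}\propto -z_c$, and exhibit a residual term proportional to $(\rho_{\max}-\rho_{\min})\,z_{c_{\max}}$ that does not lie along $z_c$ when $c\ne c_{\max}$. The paper carries this out concretely by importing the per-anchor gradient formula of \citet{NEURIPS2020_d89a66c7} and plugging in the two ETF inner-product values $1$ and $-1/|\mathcal{C}|$, arriving at Equation~\eqref{eqn-res-thm1}; it does not explicitly project onto the tangent space nor track the positive/negative roles of $z_c$.

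Your version is slightly more thorough in two respects: you make the Riemannian (tangential) projection $(I-z_cz_c^T)$ explicit, which is the correct first-order optimality condition on the sphere, and you propose to aggregate all three roles of $z_c$ in the expected loss rather than only the anchor term. This buys a cleaner statement that the ETF is not a stationary point of the \emph{full} expected loss; the paper's per-anchor computation is strictly speaking only one summand of that gradient, though it already isolates the decisive $(\rho_{\max}-\rho_{\min})z_{c_{\max}}$ asymmetry. Either route yields the same residual and the same geometric conclusion (non-dominant centroids drift toward $-z_{c_{\max}}$), so the extra bookkeeping you flag as the ``main obstacle'' is real but does not change the outcome.
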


We compute the gradient of the NCE loss with respect to each vector on the simplex-ETF with the equation in \citet{NEURIPS2020_d89a66c7}. The gradient of each case is computed as Equation \ref{eqn-res-thm1}.

\begin{equation}
\frac{\partial \mathcal{L}_{\text{NCE}, i}}{\partial z_{i}}\propto\begin{cases}z_i\cdot (-1+\rho_{\max}-\rho_{\min})& c_{i}=c_{\max}\\
-z_{i}+z_{\max}\cdot (\rho_{\max}-\rho_{\min}) & c_{i}\neq c_{\max}
\end{cases}
\label{eqn-res-thm1}
\end{equation}

As a result, the gradient will contain a non-zero $z_{\max}$ term when $c_{i}\neq c_{\max}$. This means that all vectors except $z_{\max}$ will get closer to the $-z_{\max}$ direction after updates. This indicates that frequent collision will disturb the training of general representation of the data. Details on this observation is provided in Appendix \ref{apdx-obs3}. The visualization is also shown in Figure~\ref{fig1}.

\begin{figure*}[tb]
    \begin{center}
        \includegraphics[width=\textwidth]{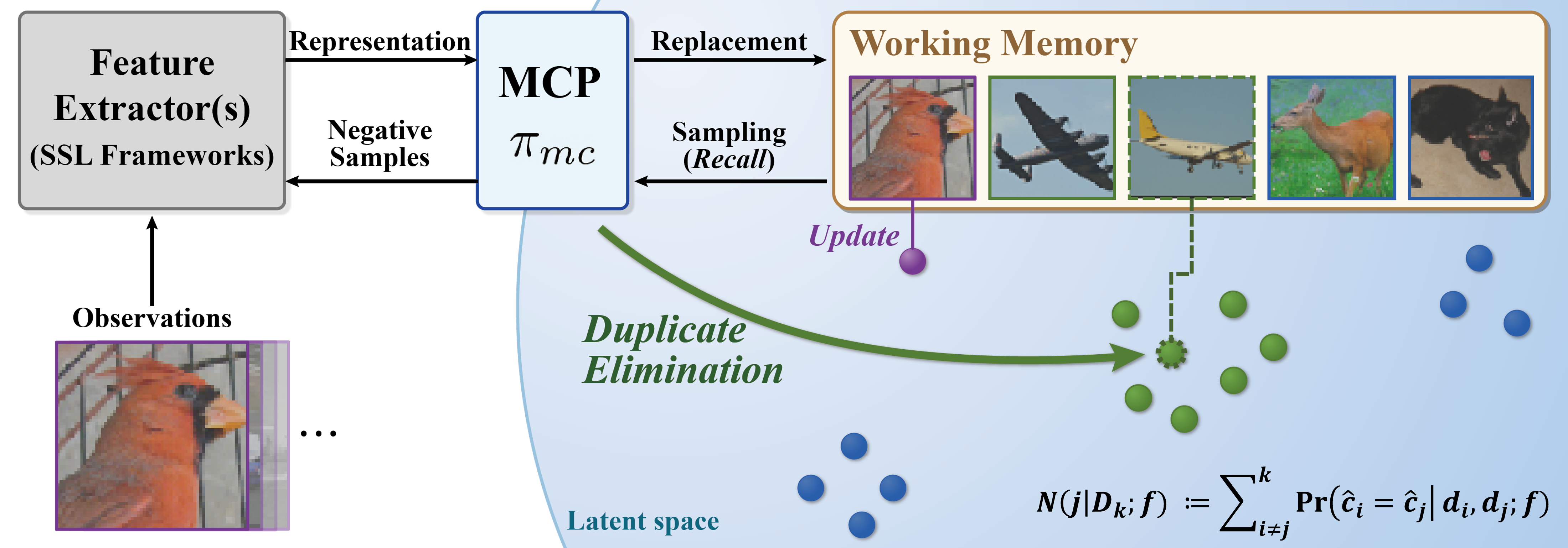}
        \caption{Visualization of general DUEL framework. Our method stores various data for the negative samples by adaptive Duplicate Elimination. A memory control policy selects the most duplicated sample in memory (\textcolor{PG}{green}) and replaces it with current data (\textcolor{Purple}{purple}).}
        \label{figDUEL}
    \end{center}
    \vspace*{-0.5em}
\end{figure*}

\section{Adaptive Duplicate Elimination (DUEL) with Working Memory}
\label{DUEL}
For real-world agents such as humans, dealing with biases caused by physical accessibility is important. The working memory solves the problem by updating recent data while reducing the intensity of dominant information \citep{miyake2000unity,wongupparaj2015relation}. Inspired by this paradigm, we propose a memory control policy that replaces duplicated data with current data to reduce biases in the dataset.

\subsection{Memory Control Policy (MCP)}
\begin{wrapfigure}{r}{0.4\textwidth}
    \vspace*{-4em}
    \begin{center}
    \includegraphics[width=0.4\textwidth]{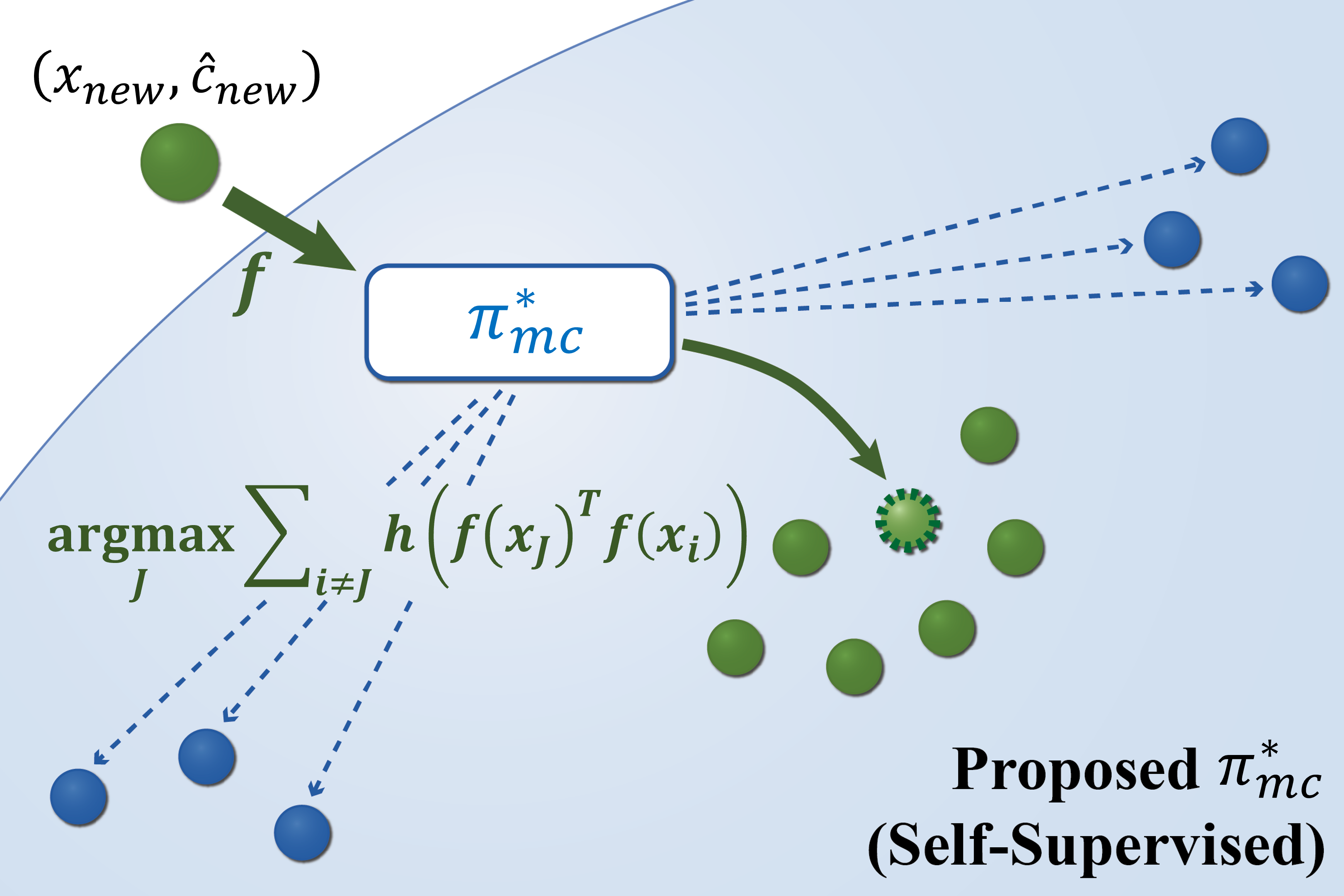}
    \vspace*{-1.5em}
    \caption{$\pi_{mc}^*$ in SSL.}
    \label{fig2b}
    \end{center}
    \vspace*{-3em}
\end{wrapfigure}
In the perspective of working memory, both storing recent data and avoiding the dominance of specific information are essential. This implies that the MCP should focus on choosing the most duplicated data for replacement and also needs a measurement of information to define the replacement criterion. In this case, the agent only can get the cosine similarity of data pairs in SSL, so we define a pair-wise collision probability with a score function to measure the estimated number of duplicates in memory.

\begin{defn}[Pair-wise collision probability]
Let there be two data $d_i=(x_{i},\hat{c}_{i})$ and $d_j=(x_{j},\hat{c}_{j})$. The probability that the implicit classes $\hat{c}_i$ and $\hat{c}_j$ are the same with feature extractor $f$ and a score function $h$ can be defined as below.
\vspace{-1.5em}
\end{defn}

\begin{equation}
\label{eqn-collision}
Pr(\hat{c}_i=\hat{c}_j|d_i,d_j;f)=h(f(x_i)^Tf(x_j))
\end{equation}

There can be various $h$ functions that satisfy the properties of cosine similarity and probability. This pair-wise collision probability can be applied to the MCP to compute the expected number of duplicates of each sample in a set of data $D_k=\{d_i\}_{i=1}^{k}$ with Equation \ref{eqn-n-dup}.
\begin{equation}
\label{eqn-n-dup}
N(j|D_k;f):=\sum_{i\neq j}^{k}Pr(\hat{c}_i=\hat{c}_j|d_i,d_j;f)
\end{equation}
 With $N(\cdot|D_k;f)$ function, we can design a simple MCP to remove samples that have duplicated information in the memory.
\begin{defn}[Naïve MCP]
Let there be a poilcy $\pi_{mc}^*$ that handles data one by one. In replacement, the policy chooses $J$-th sample as $\arg\max_J N(J|D_k;f)$ and replaces it with the current input.
\vspace{-0.5em}
\label{defn:MCP}
\end{defn}

Figure \ref{fig2b} shows the behavior of proposed MCP $\pi_{mc}^*$. The $\pi_{mc}^*$ finds the dense area (green) of the latent space and ejects the most duplicated element (dotted outline). The sparse region (blue) is not influenced by this replacement and it will increase or maintain the variety of the memory structure.

\subsection{DUEL Framework for Biased Dataset}
\label{DUEL_framework}
In DUEL framework, adaptive duplicated elimination with the model $f$ and training of the model parameters $\theta$ are executed simultaneously. The procedure of our framework is shown in Figure \ref{figDUEL}. The framework shares the training part with previous works, so main difference of our framework is choosing the most duplicated data in memory $\mathcal{M}$. For experiments, we use the DUEL framework with $\pi_{mc}^*$ in Definition \ref{defn:MCP} which selects $J$-th element with the highest value of $N(J|\mathcal{M};f,\theta)$. We use the simple policy that processes each data on-the-fly, but there also can be more complex policies from such as reinforcement learning that can update data in a batch at once. General DUEL algorithm and our implementation with $\pi_{mc}^*$ are shown in Appendix \ref{apdx-alg}. 

\section{Experiment}

The DUEL's ultimate goal is both effective and efficient framework which can remove duplicates in memory adaptively and train its feature extractor at the same time. We design experiments to validate our approach's robustness in the biased environment.

\begin{table}[tb]
    \renewcommand*{\arraystretch}{1.1}
    \begin{center}
        \caption{Top-$k$ accuracy. (CIFAR-10, \%)}
        \label{table1}
        \begin{adjustbox}{width=\textwidth}
            \begin{tabular}{c|cccc|cccc}
            \Xhline{3\arrayrulewidth}
            \multirow{3}{*}{Methods}&\multicolumn{4}{c|}{Top-1} &\multicolumn{4}{c}{Top-10} \\
            &\multicolumn{4}{c|}{Bias factor ($\rho_{\max}/\rho_{\min}$)}&\multicolumn{4}{c}{Bias factor ($\rho_{\max}/\rho_{\min}$)} \\
            &$\times1.0$&$\times3.0$&$\times9.0$&$\times27.0$&$\times1.0$&$\times3.0$&$\times9.0$&$\times27.0$\\
            \hline
            MoCoV2\citep{chen2020improved}&31.38&32.38&27.68&23.78&84.55&84.32&81.74&78.40\\
            SimCLR\citep{pmlr-v119-chen20j}&39.85&36.94&35.58&27.17&88.38&87.20&84.34&76.98\\
            SimSiam\citep{Chen_2021_CVPR}&18.55&16.14&18.72&20.11&76.45&73.58&76.58&78.91\\
            \hline
            D-MoCo (ours)&37.12&34.45&\bf38.20&28.13&87.22&85.48&\bf87.59&\bf82.99\\
            D-SimCLR (ours)&\bf 42.51&\bf40.79&37.14&\bf33.36&\bf89.33&\bf88.66&82.49&80.51\\
            \Xhline{3\arrayrulewidth}
            \end{tabular}
        \end{adjustbox}
    \end{center}
    \vspace{-1em}
\end{table}

\begin{table}
    \renewcommand*{\arraystretch}{1.1}
    \begin{minipage}{0.50\textwidth}
        \vspace{-7em}
        \caption{Downstream task accuracy. (\%)}
        \label{table2}
        \begin{adjustbox}{width=\textwidth}
            \begin{tabular}{c|cccc}
            \Xhline{3\arrayrulewidth}
            \multirow{2}{*}{Methods}&\multicolumn{4}{c}{Bias factor ($\rho_{\max}/\rho_{\min}$)} \\
            &$\times1.0$&$\times3.0$&$\times9.0$&$\times27.0$\\
            \hline
            MoCoV2&54.25&55.32&49.54&42.17\\
            SimCLR&63.34&61.19&61.06&49.02\\
            \hline
            \multirow{2}{*}{D-MoCo}&58.15&56.59&\bf62.10&52.06\\
            &\textcolor{PG}{\textbf{(+3.9)}}&\textcolor{PG}{(+1.27)}&\textcolor{PG}{(+12.56)}&\textbf{\textcolor{PG}{(+9.89)}}\\
            \hline
            \multirow{2}{*}{D-SimCLR}&\bf65.39&\bf63.69&60.99&\bf56.60\\
            &\textcolor{PG}{\textbf{(+2.05)}}&\textcolor{PG}{(+2.50)}&(-0.07)&\textbf{\textcolor{PG}{(+7.58)}}\\
            \Xhline{3\arrayrulewidth}
            \end{tabular}
        \end{adjustbox}
    \end{minipage}%
    \hspace{1em}
    \vspace{-2em}
    \begin{minipage}{0.45\textwidth}
        \begin{adjustbox}{width=\textwidth}
            \includegraphics{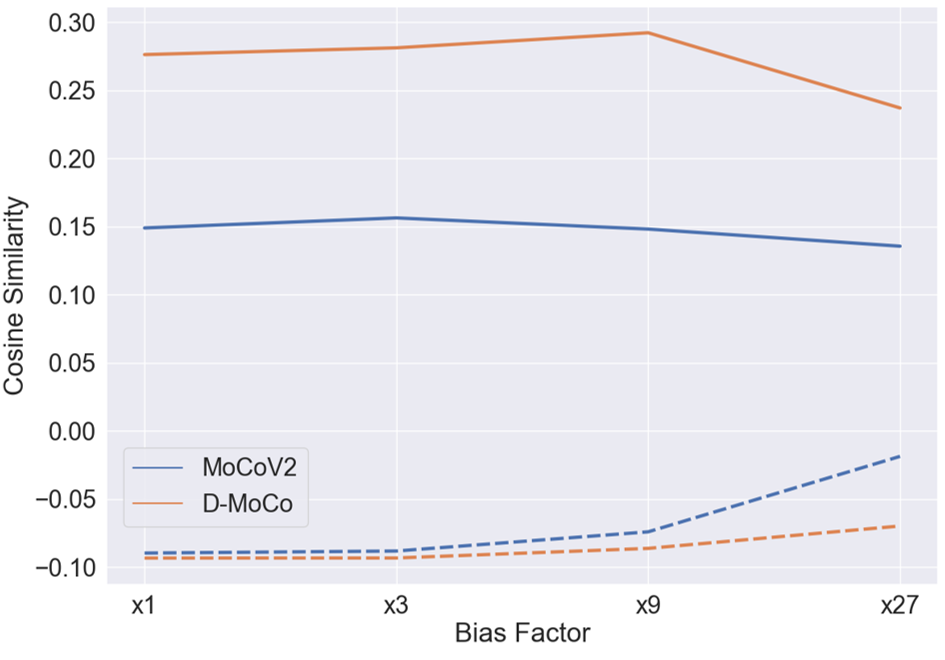}
        \end{adjustbox}
        \captionof{figure}{Average intra- (solid) and inter-class (dotted) cosine similarity. (MoCo-based)}
        \label{fig3}
    \end{minipage}
    \vspace{-2.5em}\\
    \begin{minipage}{0.50\textwidth}
        \vspace{-1.5em}
        \begin{center}
            \caption{\textbf{Ablation:} $h$ function. ($\times27.0$, \%)}
            \label{table3}
            \begin{adjustbox}{width=0.85\textwidth}
                \begin{tabular}{c|ccc}
                \Xhline{3\arrayrulewidth}
                Methods&Linear&Gaussian&Quadratic\\
                \hline
                D-MoCo&52.06&56.81&57.32\\
                D-SimCLR&56.60&-&-\\
                \Xhline{3\arrayrulewidth}
                \end{tabular}
            \end{adjustbox}
        \end{center}
    \end{minipage}
\end{table}
\textbf{Experiment setting.} We implement the artificial dataset with biased class distribution $\rho$ in Section \ref{comp_grad}: $\rho_{\max}$ with $c_{\max}$ and otherwise, $\rho_{\min}$. Both proposed and previous frameworks are compared with various bias factors $\rho_{\max}/\rho_{\min}$. More details on the training are explained in Appendix \ref{apdx-detail-5}.

\textbf{Comparison with competitive frameworks.} We first evalute our approach to the popular SSL frameworks. Table \ref{table1} shows the Top-$k$ accuracies of each model. Our approaches outperform the previous frameworks in all environments with various bias factors. There are large gains of about 5\% in average with the bias factor $\times$27 in both D-MoCo and D-SimCLR than their origin framework respectively. Table \ref{table2} also shows our framework is robust in \textit{every} environment including the unbiased dataset. It implies that the processed dataset still contains \textit{implicit} biases which humans hardly discriminate, and our framework can remove them effectively. Figure \ref{fig3} also supports proposed framework can extract more robust representations than original framework: average inter-class cosine similarity of proposed model is smaller, and conversely, average intra-class similarity is bigger.

\textbf{Ablation study: $h$ function.} We test our framework with several $h$ functions: Linear (default), Quadratic, and Gaussian. The visualizations of these functions are also shown in Appendix \ref{apdx-detail-5}. MoCo-based model performs robustly with all function types, but SimCLR-based one fails when the function is not linear. We carefully claim that this function will affect the behavior of MCP in early-stage learning, but more analysis is needed to prove this phenomenon.

\section{Discussion}

In this work, we suggest a novel framework with adaptive memory control which imitates the working memory's behavior. Our framework outperforms its original SSL framework, especially in a severely biased dataset. The robustness of proposed framework supports our claim that adaptive methods in gathering negative samples are essential for more stable SSL training.

\begin{ack}
This work was partly supported by the Institute of Information \& Communications Technology Planning \& Evaluation (2015-0-00310-SW.StarLab/20\%, 2019-0-01371-BabyMind/20\%, 2021-0-02068-AIHub/10\%, 2021-0-01343-GSAI/10\%, 2022-0-00951-LBA/20\%, 2022-0-00953-PICA/20\%) grant funded by the Korean government.
\end{ack}

\bibliography{references.bib}

\appendix
\setcounter{thm}{0}
\setcounter{obs}{0}
\section{Appendix}

\subsection{Algorithms}
\label{apdx-alg}

Algorithm \ref{alg-DUEL-general} and Algorithm \ref{alg-DUEL-naive} describe our proposed framework, called DUEL. The replacement in the working memory can be interpreted with the decision-making problem. In this case, the decisions with the policy are the indices of the data in the working memory. Various algorithms which can be separated into the smaller decision-making processes (e.g. sorting or Reinforcement Learning) can be used for the replacement procedure. Also, there is an alternative way to implement our framework: updating memory with current data first and sampling only from memory for the training. However, we do not conduct experiments with those setups because we avoid considerable modifications to the previous frameworks for a fair comparison. In this work, we have shown that our framework is general and also performs more robust than its original model, so experiments with those various techniques will be conducted in future work.

\begin{algorithm}[h]
    \setstretch{1.1}
    \begin{algorithmic}[1]
        \State \textbf{Model : } feature extractor $f$, memory $\mathcal{M}$, memory control policy $\pi_{mc}$
        \State \textbf{Input : } biased dataset $\mathcal{D}$, batch size $B$, learning rate $\eta$, intial memory $\mathcal{M}_{0}$
        \State \textbf{Output :} Trained parameter $\theta^*$
        \State $\theta\leftarrow \theta_{0}$
        \State $\mathcal{M}\leftarrow \mathcal{M}_0$
        \While {$\theta$ is not converged}
            \State $\{(d_{b},d_{b}^+)\}_{b=1}^B \leftarrow \text{sample}(\mathcal{D})$
            \State $\mathcal{L} \leftarrow 
            \mathcal{L}_{\text{NCE}}(\{d_{b}\}_{b=1}^B,\{d_{b}^+\}_{b=1}^B,\{d_{b}^+\}_{b=1}^B\cup\mathcal{M};f,\theta)$
            \State $\theta\leftarrow \theta - \text{Optimizer}(\nabla_{\theta}\mathcal{L}, \eta)$
            \For {$b \in \{1,\cdots,B\}$}
                \State $\mathcal{M} \leftarrow \mathcal{M}\cup\{d_{b}\}$
                \State $J \leftarrow \arg\max_{J}\pi_{mc}(\mathcal{M},J;f,\theta)$
                \State $\mathcal{M}\leftarrow \mathcal{M}/\{d_{J}\}$
            \EndFor
            \State $\pi_{mc} \leftarrow \text{UpdatePolicy}(\pi_{mc})$
        \EndWhile
        \State $\theta^*\leftarrow \theta$
    \end{algorithmic}
    \caption{DUEL Framework}
    \label{alg-DUEL-general}
\end{algorithm}

\begin{algorithm}[h]
    \setstretch{1.1}
    \begin{algorithmic}[1]
        \State \textbf{Model : } feature extractor $f$, memory $\mathcal{M}$
        \State \textbf{Input : } biased dataset $\mathcal{D}$, batch size $B$, learning rate $\eta$, intial memory $\mathcal{M}_{0}$
        \State \textbf{Output :} Trained parameter $\theta^*$
        \State $\theta\leftarrow \theta_{0}$
        \State $\mathcal{M}\leftarrow \mathcal{M}_0$
        \While {$\theta$ is not converged}
            \State $\{(d_{b},d_{b}^+)\}_{b=1}^B \leftarrow \text{sample}(\mathcal{D})$
            \State $\mathcal{L} \leftarrow 
            \mathcal{L}_{\text{NCE}}(\{d_{b}\}_{b=1}^B,\{d_{b}^+\}_{b=1}^B,\{d_{b}^+\}_{b=1}^B\cup\mathcal{M};f,\theta)$
            \State $\theta\leftarrow \theta - \text{Optimizer}(\nabla_{\theta}\mathcal{L}, \eta)$
            \For {$b \in \{1,\cdots,B\}$}
                \State $J \leftarrow \arg\max_J N(J|\mathcal{M};f,\theta)$ \text{in Equation \ref{eqn-n-dup}} \Comment{naïve MCP $\pi_{mc}^*$}
                \State $\mathcal{M}\leftarrow (\mathcal{M}\cup\{d_{b}\})/\{d_{J}\}$
            \EndFor
        \EndWhile
        \State $\theta^*\leftarrow \theta$
    \end{algorithmic}
    \caption{DUEL Framework with naïve MCP}
    \label{alg-DUEL-naive}
\end{algorithm}

\newpage
\subsection{Proofs}
\label{apdx-obs3}
\begin{obs}[Non-convergence to simplex-ETF with biased dataset] \label{obs1}
Let the representation of data form simplex-ETF. Then representations will not converge to the simplex-ETF when the data is biased with specific class $c_{\max}$.
\vspace{-0.5em}
\end{obs}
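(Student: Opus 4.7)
The plan is to show that the simplex-ETF configuration is not a stationary point of the NCE loss under the biased distribution $\rho$, which suffices to conclude that gradient-based optimization will move away from it. Concretely, I would start by assuming the normalized representations $\{z_i\}$ form a simplex-ETF as in the definition, then compute the gradient of the per-anchor loss $\mathcal{L}_{\text{NCE},i}$ with respect to $z_i$ and check whether it is purely radial (a multiple of $z_i$, which on the unit sphere contributes nothing tangentially) or whether it has a nontrivial tangential component. If the tangential component is nonzero, the ETF cannot be a fixed point of the projected-gradient flow on the sphere, and hence the representations cannot converge to (or stay at) the simplex-ETF.

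The core computation follows the approach of Graf et al.\ and the biased-sampling expansion already set up in Section \ref{comp_grad}. Writing $\ell(v)=\log(1+\sum_i e^{-v_i})$ and using the chain rule, the gradient of a single triplet term with respect to $z_i$ produces a softmax-weighted combination of $z^+$ and the $z^-_j$'s. Taking expectation over $d^-_{1:k}\sim\rho$ with $\rho_{\max}$ mass on $c_{\max}$ and $\rho_{\min}$ mass on each other class, and substituting the ETF inner products $z_i^Tz_j\in\{1,-1/|\mathcal{C}|\}$, makes every softmax weight independent of the particular non-$c_{\max}$ class (all such $z^-$ contribute symmetrically). This symmetry is what lets the sum over negatives collapse into a $z_{\max}$ term plus a symmetric-sum term which, by the ETF identity $\sum_{c_j\neq c_i} z_j = -z_i - z_{\max}\cdot\mathbf{1}[c_{\max}\neq c_i]/(\text{const})$ (up to the ETF normalization constant I would write out carefully), yields exactly the two cases in Equation \ref{eqn-res-thm1}: a pure $z_i$ term when $c_i=c_{\max}$, and an $z_i$ term plus a $(\rho_{\max}-\rho_{\min})z_{\max}$ term when $c_i\neq c_{\max}$.

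From the gradient expression I would then argue geometrically. For $c_i=c_{\max}$, the gradient is parallel to $z_i$, so projection onto the tangent space of the sphere annihilates it and the ETF is stationary in this direction. For $c_i\neq c_{\max}$, however, the gradient contains the nonparallel term $(\rho_{\max}-\rho_{\min})z_{\max}$; since $z_{\max}$ is not collinear with $z_i$ in an ETF (their inner product is $-1/|\mathcal{C}|$, not $\pm 1$), the tangential projection $z_{\max}-(z_i^Tz_{\max})z_i$ is nonzero, and it points in the $-z_{\max}$ direction after the descent step. Because $\rho_{\max}>\rho_{\min}$ strictly under bias, this tangential component does not vanish, so the ETF is not a critical point and the flow moves every non-$c_{\max}$ representation toward $-z_{\max}$, breaking the equiangular structure.

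The main obstacle I expect is bookkeeping rather than conceptual: carefully tracking the softmax weights under the logistic loss with $k$ negatives, verifying that the ETF inner-product symmetry really does let the sum over non-$c_{\max}$ negatives collapse cleanly, and handling the normalization so that the final expression matches Equation \ref{eqn-res-thm1} up to a positive scalar. A secondary subtlety is being precise about what "non-convergence" means: since $f$ maps to the sphere, I must argue with the Riemannian (projected) gradient rather than the ambient one, and state the conclusion as "the simplex-ETF is not a stationary configuration of the projected gradient flow of $\mathcal{L}_{\text{NCE}}$ whenever $\rho_{\max}\neq\rho_{\min}$," from which non-convergence follows.
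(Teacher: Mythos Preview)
Your proposal is correct and follows essentially the same route as the paper: compute the gradient of the NCE loss at the simplex-ETF using the softmax-weighted formula from \citet{NEURIPS2020_d89a66c7}, split into the two cases $c_i=c_{\max}$ and $c_i\neq c_{\max}$, and use the ETF identity $\sum_c z_c=0$ to collapse the negative sum, yielding exactly Equation~\ref{eqn-res-thm1}. Your added step of explicitly projecting onto the tangent space of the sphere to conclude non-stationarity is a small but worthwhile refinement that the paper leaves implicit; the only place to be careful is the ETF sum you sketched, which should simply read $\sum_{c\neq c_i} z_c=-z_i$ (so that for $c_i\neq c_{\max}$ the weighted negative sum becomes $-k\rho_{\min}z_i + k(\rho_{\max}-\rho_{\min})z_{\max}$), not the indicator-scaled version you wrote.
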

\begin{proof}
    In \citet{NEURIPS2020_d89a66c7}, the authors formulated the gradient of the NCE Loss. We ignored the temperature parameter $\tau$ for the ease of formulation.
    \begin{equation}
    \label{eqn:grad-nce}
    \frac{\partial \mathcal{L}_{i}}{\partial z_{i}}=\sum_{p\in P(i)}z_{p}(P_{ip}-X_{ip})+\sum_{n\in N(i)}z_{n}P_{in}
    \end{equation}
    
    Let $c_{i}=c_{max}$. In this case, $P_{ip}$, $X_{ip}$ and $P_{in}$ can be computed as below. Note that the $P_{ip}$ and $P_{in}$ is the same with $p$ and $n$ each.
    \begin{equation}
    P^+=P_{ip}=\frac{exp(1)}{k\{\rho_{\max}exp(1)+(1-\rho_{\max})exp(-1/|\mathcal{C}|)\}}, X_{ip}=\frac{1}{k\rho_{\max}}
    \end{equation}
    \begin{equation}
    P^-=P_{in}=\frac{exp(-1/|\mathcal{C}|)}{k\{\rho_{\max}exp(1)+(1-\rho_{\max})exp(-1/|\mathcal{C}|)\}}
    \end{equation}
    Then the gradient of $z_i$ can be derived as
    \begin{equation}
    \frac{\partial \mathcal{L}_{i}}{\partial z_{i}}=k\rho_{\max}z_{i}\left(P^+-\frac{1}{k\rho_{\max}}\right)+P^-\cdot\sum_{n\in N(i)}z_{n}
    \end{equation}
    By the property of the ETF, the sum of all different vectors is $0$.
    \begin{equation}
    =z_{i}\cdot k\rho_{\max}\left(P^+-\frac{1}{k\rho_{\max}}\right)-z_{i}\cdot k\rho_{\min}P^-
    \end{equation}
    \begin{equation}
    =z_i(k\rho_{\max}P^+-k\rho_{\min}P^--1)=z_i\cdot kP^-(-1+\rho_{\max}-\rho_{\min})
    \end{equation}
    So the gradient of the loss function is parallel to the feature vector when $c_{i}=c_{\max}$. However, if $c_{i}\neq c_{\max}$, the gradient will be computed differently.
    \begin{equation}
    P^+=P_{ip}=\frac{exp(1)}{k\{\rho_{\min}exp(1)+(1-\rho_{\min})exp(-1/|\mathcal{C}|)\}}, X_{ip}=\frac{1}{k\rho_{\min}}
    \end{equation}
    \begin{equation}
    P^-=P_{in}=\frac{exp(-1/|\mathcal{C}|)}{k\{\rho_{\min}exp(1)+(1-\rho_{\min})exp(-1/|\mathcal{C}|)\}}
    \end{equation}
    \begin{equation}
    \frac{\partial \mathcal{L}_{i}}{\partial z_{i}}=k\rho_{\min}z_{i}\left(P^+-\frac{1}{k\rho_{\min}}\right)+P^-\cdot\sum_{n\in N(i)}z_{n}
    \end{equation}
    \begin{equation}
    =z_{i}(k\rho_{\min}P^+-1-k\rho_{\min}P^-)+z_{\max}\cdot k(\rho_{\max}-\rho_{\min})P^-
    \end{equation}
    \begin{equation}
    =z_{i}\cdot(-kP^-)+z_{\max}\cdot kP^-(\rho_{\max}-\rho_{\min})
    \end{equation}
\end{proof}

\begin{thm}[Safety of naïve MCP]
Let the latent space form a simplex ETF. Suppose that there is a new data $d_{new}$ and $\pi_{mc}^*$ should replace a sample with $d_{new}$. Let sample chosen by $\pi_{mc}^*$ among the pool $D_k$ be $d_{J}$. Let $p_d$ with the replaced pool be $p_{d|\pi}$. After then replacement, $p_d\le p_{d|\pi}$ is satisfied for any pool $D_k$.
\vspace{-0.5em}
\end{thm}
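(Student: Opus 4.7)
The plan is to reduce the claim to a single majorization transfer on the vector of class counts of the pool. Because the latent space is a simplex ETF, every inner product $f(x_i)^T f(x_j)$ takes one of only two values, $1$ or $-1/|\mathcal{C}|$, so by Equation~\ref{eqn-collision} the pair-wise collision probability also takes only two values. Under the minimal sanity condition $h(1) > h(-1/|\mathcal{C}|)$ on the score function, the duplicate-count $N(J|D_k;f)$ from Equation~\ref{eqn-n-dup} is therefore, up to a positive affine transformation, equal to $n_{c(J)} - 1$, where $c(J)$ is the class of the $J$-th sample and $n_c$ denotes the count of class $c$ in $D_k$. Consequently the index selected by $\pi_{mc}^*$ always lies in some majority class $c^* \in \arg\max_c n_c$.

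Next I would fix a concrete reading of $p_d$. The statement does not pin this down explicitly, but under the natural SSL-compatible interpretation that $p_d$ is the probability that two uniformly drawn distinct pool elements belong to different classes,
\begin{equation}
p_d(D_k) = 1 - \frac{\sum_c n_c(n_c-1)}{k(k-1)},
\end{equation}
which is a strictly Schur-concave function of the count vector $(n_1, \ldots, n_{|\mathcal{C}|})$. The same Schur-concavity holds for every standard diversity index (Simpson, Rényi or Shannon entropy of the empirical class distribution), so the argument is insensitive to the precise choice.

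Replacement by $d_{new}$ modifies the count vector in a controlled way: if $c'$ is the class of $d_{new}$, then $n_{c^*}$ decreases by $1$ and $n_{c'}$ increases by $1$. When $c' = c^*$ the vector is unchanged and $p_{d|\pi} = p_d$. Otherwise, the inequality $n_{c^*} \geq n_{c'}$, which holds by the choice of $c^*$, makes the operation a Pigou--Dalton transfer from a larger coordinate to a weakly smaller one, so the new count vector is majorized by the old one and Schur-concavity immediately yields $p_{d|\pi} \geq p_d$.

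The main obstacle is the under-specification of $p_d$ in the statement: the first honest step in any rigorous proof is to fix a formal definition, because the Schur-concavity property driving the inequality depends on it. Once a Schur-concave diversity measure is chosen, the mathematical content collapses to the one-line majorization argument above. A secondary subtle point is that the simplex-ETF hypothesis is what collapses the geometry so that only class labels matter; without it one would need an additional monotonicity property of $h$ with respect to cosine similarity to salvage the argument in terms of continuous similarities.
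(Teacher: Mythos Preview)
Your reduction to a majorization argument is sound and captures the same content as the paper's proof, but through a cleaner and more conceptual lens. The paper pins down $p_d$ inside the proof as the double sum $\sum_i\sum_j\{1-Pr(\hat c_i=\hat c_j\mid d_i,d_j;f)\}$, which under the simplex-ETF assumption is an affine, decreasing function of $\sum_c n_c(n_c-1)$; hence your normalised diversity index and the paper's $p_d$ are ordinally equivalent, and you were right that fixing such a definition is the first honest step. From there the paper does a direct two-case expansion ($\hat c_{\max}=\hat c_{new}$ versus $\hat c_{\max}\neq\hat c_{new}$), writing $\Delta p_d$ and $\Delta p_{d|\pi}$ explicitly in terms of $n_{\hat c_{new}}$, $n_{\hat c_J}$, $h(1)$, $h(-1/|\mathcal C|)$ and comparing line by line. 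Your route replaces that bookkeeping with the single observation that moving one unit of count from a maximal coordinate to another is a Pigou--Dalton transfer and then invokes Schur-concavity. This buys generality (any Schur-concave diversity measure works, not just the paper's particular $p_d$) and brevity; the paper's computation, on the other hand, stays closer to the concrete score function $h$ actually used in the algorithm.

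One technical slip worth tightening: a unit Pigou--Dalton transfer yields majorization of the new vector by the old only when the donor coordinate is \emph{strictly} larger than the recipient. If $c'\neq c^*$ but $n_{c'}=n_{c^*}$, the transfer produces $(n_{c^*}-1,\,n_{c'}+1)$, which \emph{majorizes} the old vector rather than being majorized by it, and Schur-concavity then points the wrong way. The paper's explicit computation glosses over the same boundary case via the non-strict inequality $n_{\hat c_J}\geq n_{\hat c_{new}}$. The simplest fix is to adopt the insert-then-eject order of Algorithm~\ref{alg-DUEL-general}, under which the $\arg\max$ is taken over $D_k\cup\{d_{new}\}$ and the strict inequality $n_{c^*}>n_{c'}$ (or $c^*=c'$) is automatic.
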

\begin{proof}
    \begin{equation}
    p_d=\sum_{i}\sum_{j}Pr(\hat{c}_{i}\neq \hat{c}_{j}|d_{i},d_{j};f)
    \end{equation}
    \begin{equation}
    \label{eqn:pd-change}
    =\sum_{i}\sum_{j}\{1-Pr(\hat{c}_{i}= \hat{c}_{j}|d_{i},d_{j};f)\}
    \end{equation}
    
    To show $p_{d}\le p_{d|\pi}$, we show that $p_{d}$ increases when $d_{new}$ replaces $d_{J}$. During replacement, $(k-1)$ samples will not be changed, so we defined $\Delta p_d$ and $\Delta p_{d|\pi}$ to ignore the remaining ones' relationship. In the end, proving $p_{d}\le p_{d|\pi}$ is the same as proving $\Delta p_{d} \le \Delta p_{d|\pi}$.
    
    \textit{i) $\hat{c}_{\max}=\hat{c}_{new}$}
    
    Let $c_{\max}$ be the implicit class with the maximum number of samples that share the representation. By the definition, $\arg\max_J N(J|D_k;f)=\sum_{i}Pr(\hat{c}_{i}= \hat{c}_{J}|d_{i},d_{j};f)$ selects an element with the class $\hat{c}_{\max}$. $\hat{c}_{\max}=\hat{c}_{new}$ means that two data $d_{new}$ and $d_{J}$ are \textit{latent indistinguishable}, so $\Delta p_{d}=\Delta p_{d|\pi}$.\\
    
    \textit{ii) $\hat{c}_{\max}\neq\hat{c}_{new}$}
    
    In this case, $n_{\hat{c}_{J}}=n_{\hat{c}_{\max}}\ge n_{\hat{c}_{new}}$ is satisfied.
    
    \begin{equation}
    \Delta p_{d|\pi} = \sum_{i\neq J} \{1 - Pr(\hat{c}_{i}=\hat{c}_{new}|\cdot)\} + 1 - Pr(\hat{c}_{new}=\hat{c}_{new}|\cdot)
    \end{equation}
    \begin{equation}
    = (n_{\hat{c}_{new}}-1)\{1 - h(-1/|\mathcal{C}|)\} + (k-n_{\hat{c}_{new}})\{1 - h(1)\} + 1 - Pr(\hat{c}_{J}=\hat{c}_{J}|\cdot)
    \end{equation}
    \begin{equation}
    \ge (n_{\hat{c}_{J}}-1)\{1 - h(-1/|\mathcal{C}|)\} + (k-n_{\hat{c}_{J}})\{1 - h(1)\} + 1 - Pr(\hat{c}_{J}=\hat{c}_{J}|\cdot)
    \end{equation}
    \begin{equation}
    =\sum_{i\neq J}\{1 - Pr(\hat{c}_{i}=\hat{c}_{J}|\cdot)\} + 1 - Pr(\hat{c}_{J}=\hat{c}_{J}|\cdot)=\Delta p_d
    \end{equation}
\end{proof}

\begin{thm}[Nearly latent indistinguishable] 
Let two samples be \textit{nearly latent indistinguishable}, $Pr(\hat{c}_i=\hat{c}_j|d_i,d_j;f)\ge \alpha \ge 0$. Then there exists the unique $\alpha^*$ that satisfies $Pr(\hat{c}_i=\hat{c}_j|d_i,d_j;f)\ge \alpha \leftrightarrow f(x_{i})^Tf(x_{j})\ge \alpha^*$.
\vspace{-0.5em}
\end{thm}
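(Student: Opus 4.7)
The plan is to unfold the definition in Equation \ref{eqn-collision} to reduce the claim to a statement about the score function $h$ alone. Writing $s_{ij}=f(x_i)^T f(x_j)$, the hypothesis $Pr(\hat c_i=\hat c_j\mid d_i,d_j;f)\ge\alpha$ is exactly $h(s_{ij})\ge\alpha$, and what must be shown is that this is equivalent to $s_{ij}\ge\alpha^*$ for a unique threshold $\alpha^*$. So the theorem really asserts the existence and uniqueness of a level value $\alpha^*$ of $h$ at height $\alpha$.

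The key observation is that for $h$ to be a meaningful ``probability of belonging to the same class as a function of cosine similarity'', it should be continuous and strictly monotonically increasing on $[-1,1]$: higher similarity should correspond to higher probability of collision, and the three candidate $h$'s used in the ablation study (linear, Gaussian centered appropriately, quadratic on the meaningful range) all have this property. I would begin the proof by making these assumptions on $h$ explicit.

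Existence of $\alpha^*$ then follows from the intermediate value theorem applied to the continuous function $h$ on the compact interval $[-1,1]$: provided $\alpha$ lies in the range of $h$, we may take $\alpha^*=h^{-1}(\alpha)$; in the boundary cases $\alpha<\min h$ or $\alpha>\max h$ one sets $\alpha^*=-1$ or $\alpha^*=+1$ respectively, and the equivalence still holds vacuously on one side. Uniqueness is immediate from strict monotonicity: if two thresholds $\alpha^*_1<\alpha^*_2$ both satisfied the equivalence, then any $s\in(\alpha^*_1,\alpha^*_2)$ would give $h(s)\ge\alpha$ and $h(s)<\alpha$ simultaneously, a contradiction.

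The main obstacle, and the only place where care is needed, is making the assumptions on $h$ precise and matching them to what the paper has informally called ``the properties of cosine similarity and probability''. Once strict monotonicity and continuity are in hand, the proof is essentially a one-line invocation of the inverse function theorem for continuous monotone maps, so the bulk of the write-up will be devoted to justifying that these properties are the right formalization of $h$ rather than to the analytic argument itself.
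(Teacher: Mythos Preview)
Your proposal is correct and matches the paper's proof almost exactly: the paper likewise assumes $h$ is continuous and monotone, invokes the intermediate value theorem to produce the threshold, and reads off uniqueness from monotonicity. The only cosmetic differences are that the paper parametrizes the similarity by the angle $\theta\in[0,\pi]$ via $s=\cos\theta$ and applies the IVT to $h\circ\cos$ rather than to $h$ directly, and it fixes the endpoint values $h(-1)=0$, $h(1)=1$ instead of handling out-of-range $\alpha$ as a separate boundary case.
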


\begin{proof}
    By the definition of $f$, there exists $\theta \in \left[0,\pi\right]$ that satisfies $\cos\theta=f(x_{i})^Tf(x_j)$.
    \begin{equation}
    h(f(x_{i})^Tf(x_{j}))=h(\cos\theta)
    \end{equation}
    Note that cosine function is continuous and monotonic decreasing in the inverval $\left[0,\pi\right]$. Then,
    \begin{equation}
    h(-1)=h(\cos(\pi))=0,h(1)=h(\cos(0))=1.
    \end{equation}
    By the property of the continuous function, $(h\circ \cos)$ is continuous and monotonic decreasing function. Let $Pr(\hat{c}_i=\hat{c}_j|\cdot)=\alpha$ and $0\le \alpha \le 1$. Then we can get a unique $\theta^*$ that satisfies $h(\cos(\theta^*))=\alpha$ by using intermediate value theorem. If we set $\alpha^*=\cos(\theta^*)$, the proposition is proved.
\end{proof}

\subsection{Details of Experiments}
\label{apdx-detail-5}

\textbf{Training setting.} To validate that the proposed frameworks are robust with biased dataset, we design several experiments. We use only the CNN layers of the ResNet-50 \citep{he2016deep} as the backbone. We modify the first CNN kernel like \citet{pmlr-v119-chen20j}. We add just one linear layer for the projection layer. The dimension of the projected vectors is 256. We use Adam optimizer \citep{kingma2014adam} and Cosine scheduler \citep{chen2020improved} for decaying learning rate. Initial learning rate is set to 0.05. We trained for 40k steps with the same batch size 256 for each experiment. The temperature parameter $\tau$ is set to 0.7 for all experiments. We use fixed, weak augmentations which contain only color jittering, grayscale and horizontal filp. As images are small, we do not apply cropping or resizing to the images.

\textbf{Implementation details.} To implement our proposed models, we need to connect the memory structure to previous frameworks while not harming their properties. Designing D-MoCo is relatively easier than D-SimCLR because MoCoV2 already has the memory in it. We simply add the policy to the model directly. We use memory for D-MoCo which can hold 2048 representations of images. However, in D-SimCLR, the memory needs to hold raw images instead of their representations because the model does not use the momentum encoder. Also, for the memory efficiency, we use the stop gradient to the representations from the memory. D-SimCLR contains memory with 512 images. Figure \ref{figfunction} shows the shape of score functions we use for the experiments. We use a gaussian kernel $h(x)=\exp(-(x-1)^2/\tau)$ with temperature parameter $\tau=1$ and normalize it to fit $h(x)\in[0,1]$.

\textbf{Evaluation metrics.} We evaluate trained models with several metrics. Firstly, we use linear probing as the downstream task. We train the linear layer with SGD optimizer for 100 epochs each. The momentum for SGD is 0.9 and weight decay parameter is $10^{-6}$. Initial learning rate for this task is 0.01. We also measure the Top-$k$ accuracies with different $k$-s (1,10).

\textbf{Resource usage.} Table \ref{table3} shows the usage of time and VRAM for both the model and the memory $\mathcal{M}$. Note that D-MoCo only needs a small amount of memory and some affordable time, contrary to its robustness against biases. D-SimCLR needs additional time because it stores images instead of their representations and it should extract representations again with newly updated encoder.

\begin{table}[h]
    \renewcommand*{\arraystretch}{1.1}
    \centering
    \begin{minipage}{0.35\textwidth}
        \vspace{-6em}
        \caption{Resource usage.}
        \label{table3}
        \begin{adjustbox}{width=\textwidth}
            \begin{tabular}{c|cc}
            \Xhline{3\arrayrulewidth}
            \multirow{2}{*}{ Methods} & Time & Memory\\
             & ($h$) & (MB)\\
            \hline
            MoCoV2 & 2.7 & 10,415 \\
            SimCLR & 4.1 & 15,573 \\
            SimSiam & 4.0 & 15,541 \\
            \hline
            D-MoCo & 3.4 & 10,415 \\
            D-SimCLR & 7.7 & 19,065 \\
            \Xhline{3\arrayrulewidth}
            \end{tabular}
        \end{adjustbox}
    \end{minipage}%
    \hspace{2em}
    \begin{minipage}{0.5\textwidth}
        \begin{adjustbox}{width=\textwidth}
            \includegraphics{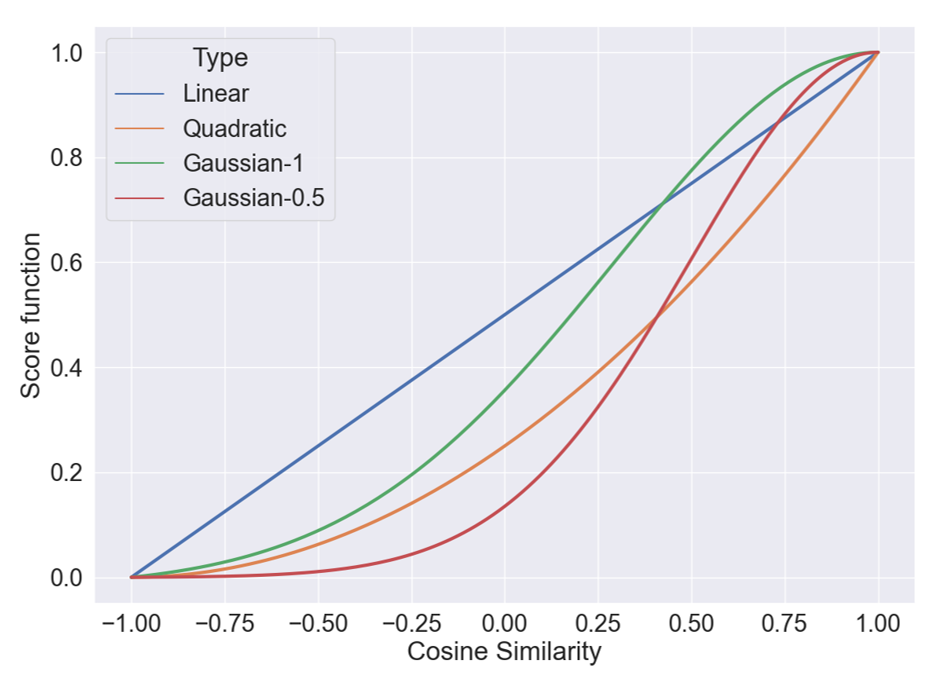}
        \end{adjustbox}
        \captionof{figure}{Visualization of score functions.}
        \label{figfunction}
    \end{minipage}
    \vspace{-0.5em}
\end{table}

\end{document}